\documentclass[10pt,twoside]{article}

\RequirePackage[a4paper]{geometry}

\geometry{
  hmargin=2.5cm,
  height=22.8cm,
  vcentering,
  headsep=25pt
}

%
%
\RequirePackage[T1]{fontenc}
\RequirePackage{lmodern}
\RequirePackage[final]{microtype}
  \RequirePackage{hyperref}
  
\raggedbottom

%
%
\RequirePackage[pagestyles]{titlesec}
\RequirePackage{titletoc}
\usepackage{graphicx}

\usepackage{xcolor}

\usepackage{amsmath,amssymb,amsthm}
\usepackage{mathrsfs}
\usepackage{physics}

\usepackage{numprint}
\npthousandsep{\,}

\usepackage{algorithm}
\usepackage[noend]{algpseudocode}
\usepackage[onehalfspacing]{setspace}

\usepackage{booktabs}


\usepackage{authblk}

\newtheorem{theorem}{Theorem}

\newtheorem{proposition}{Proposition}

\theoremstyle{definition}
\newtheorem{definition}{Definition}

\newcommand{\bangle}[1]{\langle #1 \rangle}

\newcommand{\half}{\frac{1}{2}}

\newcommand{\normal}{\mathcal N}
\newcommand{\uniform}{\mathbb U}
\newcommand{\discrete}{\text{Discr}}

\newcommand{\reals}{\mathbb R}
\newcommand{\naturals}{\mathbb N}

\newcommand{\expectation}{\mathbb E}

\newcommand{\sor}{\,\lor\,}
\newcommand{\sand}{\,\land\,}
\newcommand{\snot}{\lnot\,}
\newcommand{\until}{\ \mathbf U\ }
\newcommand{\eventually}{\ \mathbf F\ }
\newcommand{\always}{\ \mathbf G\ }
\newcommand{\true}{tt}

\newcommand{\kr}{k}

\title{A kernel function for Signal Temporal Logic formulae\thanks{This research has been partially supported by the Austrian FWF projects ZK-35 and by the Italian PRIN project ``SEDUCE'' n. 2017TWRCNB.}}

\author{
Luca Bortolussi$^{1,2}$, Giuseppe Maria Gallo$^1$,
and 
Laura Nenzi$^{1,3}$\\%
$^1$ Department of Mathematics and Geoscience, University of Trieste, Italy\\%
$^2$ Modelling and Simulation Group, Saarland University, Germany \\%
$^3$ University of Technology, Vienna, Austria
}

\date{}
\renewcommand\date[1]{}



\begin{document}


\maketitle
\begin{abstract}
We discuss how to define a kernel for Signal Temporal Logic (STL) formulae. Such a kernel allows us to embed the space of formulae into a Hilbert space, and opens up the use of kernel-based machine learning algorithms in the context of STL. We show an application of this idea to a regression problem in formula space for probabilistic models. 
\end{abstract}

\section{Introduction}
\label{sec:intro}

Signal Temporal Logic (STL)~\cite{Maler2004} is gaining momentum as a requirement specification language for complex systems and, in particular, Cyber-Physical Systems ~\cite{bartocci2018specification}.
STL has been applied in several flavours, from Runtime-monitoring~\cite{bartocci2018specification} to control synthesis~\cite{HaghighiMBB19} and falsification problems~\cite{FainekosH019}, and recently also within learning algorithms, trying to find a maximally discriminating formula between sets of trajectories~\cite{bombara_decision_2016, BBS14, NenziSBB18}. In these applications, a central role is played by the real-valued quantitative semantics
\cite{donze2013efficient}, measuring robustness of satisfaction.  

Most of the applications of STL have been applied to deterministic (hybrid) systems, with less emphasis on non-deterministic or stochastic ones~\cite{BartocciBNS15}. 
Another area in which formal methods are  providing interesting tools is in logic-based distances between models, like bisimulation metrics for Markov models~\cite{BacciBLM16}, which are typically based on a branching logic. In fact, extending these ideas to linear time logic is hard \cite{jan2016linear}, and typically requires statistical approximations. 
Finally, another relevant problem is how to measure the distance between two logic formulae, thus giving a metric structure to the formula space, a task relevant for learning which received little attention for STL, with the notable exception of \cite{madsen2018metrics}.

In this work, we tackle the metric,  learning, and model distance problems from a different perspective than the classical one, which is based on some form of  comparison of the languages of formulae. The starting point is to consider an STL formula as a function mapping a real-valued trajectory (signal) into a number or into another trajectory. As signals are functions, STL formulae should be properly considered as functionals, in the sense of Functional Analysis (FA) \cite{brezis2010functional}. This point of view gives us a large bag of FA tools to manipulate formulae. What we explore here is the definition of a suitable inner product in the form of a kernel \cite{shawe2004kernel} between STL formulae, capable of capturing the notion of semantic similarity of two formulae. This will endow the space of formulae with the structure of a Hilbert space,  defining a metric from the inner  product. Moreover, having a kernel opens the use of kernel-based machine learning techniques~\cite{rasmussen:williams:2006}.

A crucial aspect is that kernels for functionals are typically defined by integrating over the support space, with respect to a given measure. However, in trajectory space, there is no canonical measure (unless one discretizes time and maps signals to $\mathbb{R}^n$), which introduces a degree of freedom on which measure to use. We decide to work with probability measures on trajectories, i.e. stochastic processes, and we build one that favours 
``simple'' trajectories, with a small total variation.  This encodes the idea that  two formulae differing on simple signals should have a larger distance than two formulae differing only on complex trajectories. As we will see in the experiments, this choice allows the effective use of this kernel to perform regression on the formula space for approximating the satisfaction probability and the expected robustness of several stochastic processes, different than the one used to build the kernel. 

  \section{Background}
\paragraph{Signal Temporal Logic.}(STL)~\cite{Maler2004} is a linear time temporal logic suitable to monitor properties of continuous trajectories. 
A trajectory is a function $\xi: I\to D$ with $I$ a \textit{time domain} in $\reals_{\geq 0}$ and $D\subseteq\reals^n$ the \textit{state space}.
  We define the \textit{trajectory space}  $\mathcal T$ as the set of all possible continuous functions over $D$. The syntax of STL is: 
   	$$\varphi:=\true|\mu|\snot\varphi|\varphi_1\sand\varphi_2|\varphi_1\until_{[a, b]}\varphi_2$$
  	where $\true$ is the Boolean \textit{true} constant, $\mu$ is an atomic predicate, {\it negation} $\snot$ and {\it conjunction} $\sand$  are the standard Boolean connectives and $\until_{[a, b]}$ is the \textit{until} operator, with $a, b\in\reals$ and $a<b$.  As customary, we can derive the {\it disjunction} operator $\vee$ and the future  {\it eventually} $\eventually_{[t_{1},t_{2}]}$ and  {\it always} $\always_{[t_{1},t_{2}]}$ operators from  the until temporal modality. The logic has two semantics: a Boolean semantics, $\xi \models \varphi$, with the meaning that the trajectory $\xi$ satisfies the formula $\varphi$ and a quantitative semantics, $\rho(\varphi, \xi) $, that can be used to measure the quantitative level of satisfaction of a formula for a given trajectory. The function $\rho$ is also called the {\it robustness} function. The robustness is compatible with the Boolean semantics since it satisfies the soundness property:  if $\rho(\varphi, \xi, t) > 0$ then $(\xi,t)\models\varphi$; if $\rho(\varphi, \xi, t) < 0$ then $(\xi,t)\not\models\varphi$. Furthermore it satisfies also the correctness property, which shows that $\rho$ 	measures how robust is the satisfaction of a trajectory with respect to perturbations. We refer  the reader to~\cite{donze2013efficient} for more details.
  	

Given a \textit{stochastic process} $\mathcal M = (\mathcal T, \mathcal A, \mu)$, where $\mathcal T$ is a trajectory space  and $\mu$ is a probability measure on a $\sigma$-algebra $\mathcal A$ of $\mathcal T$, we define the  \textit{expected robustness} as $\bangle{\rho(\varphi, t)} := \expectation[\rho(\varphi, \xi, t)] = \int_{\xi\in\mathcal T}\rho(\varphi,\xi,t)d\mu(\xi)$. 
 The qualitative counterpart of the expected robustness is the \textit{satisfaction probability} $ S(\varphi, t)$, i.e. the probability that a trajectory generated by the stochastic process $\mathcal M$ satisfies the formula $\varphi$ at the time $t$: $ \expectation[s(\varphi, \xi, t)] = \int_{\xi\in\mathcal T}s(\varphi,\xi,t)d\mu(\xi)$
 where $s(\varphi,\xi,t) = 1$ if $(\xi, t)\models\varphi$ and $0$ otherwise. 
 The satisfaction probability $S(\varphi, t)$ is the probability that a trajectory generated by the stochastic process $\mathcal M$ satisfies the formula $\varphi$ at the time $t$.

\paragraph{Kernel Functions.} A \emph{kernel} $\kr(x, z)$, $x,z\in X$, defines an integral linear operator on functions $X\rightarrow\mathbb{R}$, which intuitively can be thought of as a scalar product on a possibly infinite feature space $F$:  $\kr(x, z) = \bangle{\phi(x), \phi(z)}$, with $\phi:X\rightarrow F$ being the eigenfunctions of the linear operator, spanning a Hilbert space, see~\cite{rasmussen:williams:2006}. 
Knowledge of the kernel allows us to perform approximation and learning tasks over $F$ without explicitly constructing it. 
%




One application is kernel regression, with the goal of estimating the function
$f(x) = \expectation_y[y|x]\,,\ x\in X$, from a finite amount of observations  $\{x_1, ..., x_N\}\subset X$, where each observation $x_i$ has an associated response $y_i\in \reals$, and $\{(x_1, y_1), ..., (x_N, y_N)\}$ is the \emph{training set}.
 There exist several methods that address this problem exploiting the kernel function $\kr:X\times X\to\reals$ has a similarity measure between a generic $x\in X$ and the observations $\{x_1, ..., x_N\}$ of the training set.
 In the experiments, we compare different regression models used to compute the expected robustness and the probability satisfaction.
    
\section{A kernel for Signal Temporal Logic}
\label{sec:kernelSTL}
If we endow an arbitrary space with a kernel function, we can apply different kinds of regression methods. 
Even for a non-metric space such as the STL formulae one, with a kernel we could perform operations that are very  expensive, such as the estimation of the 
 satisfaction probability and the expected robustness for a stochastic model of any formula $\varphi$, without running additional simulations.
 The idea behind our definition is to exploit the robustness to project an STL formula to a Hilbert space, and then to compute the scalar product in that space. In fact, the more similar the two projections will be, and the higher the scalar product will result. In addition, the function that we will define will be a kernel by construction.
  
\subsection{STL kernel}
\label{par:kernel}
  Let us fix a formula $\varphi\in\mathcal{P}$ in the STL formulae space. Consider the robustness $\rho(\varphi, {}\cdot{}, {}\cdot{}):\mathcal{T}\times I\to\reals$, $I\subset\reals$ is a bounded interval, and  $\mathcal{T}$ is the trajectory space of continuous functions. We observe that there is a map $h: \mathcal{P}\to C(\mathcal{T}\times I)$ defined by $h(\varphi)(w, t) = \rho(\varphi, w, t)$. With $C(X)$ we denote the set of the continuous functions on the topological space $X$. It can be proved that
  $h(\mathcal P)\subseteq L^2(\mathcal{T}\times I)$ 
    and hence we can use the dot product in $L^2$ as a kernel for $\mathcal{P}$. Formally,
\begin{theorem}
Given the STL formulae space  $\mathcal{P}$, the trajectory space $\mathcal{T}$, a bounded interval $I\subset\reals$, let  $h: \mathcal{P}\to C(\mathcal{T}\times I)$ defined by $h(\varphi)(w, t) = \rho(\varphi, w, t)$, then:
   \begin{equation}\label{subset L}
  h(\mathcal P)\subseteq L^2(\mathcal{T}\times I)\,
  \end{equation} 
  \label{th:L2}
\end{theorem}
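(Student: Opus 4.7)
My plan is to first make explicit the measure structure on $\mathcal{T}\times I$ that is only implicit in the statement. To speak of $L^2(\mathcal{T}\times I)$ we need a reference measure; following the discussion in the introduction, I would take the product measure $\mu \otimes \lambda$, where $\mu$ is the probability measure on $\mathcal{T}$ associated with the ``simple trajectories'' stochastic process the authors introduce, and $\lambda$ is Lebesgue measure on the bounded interval $I$. Since $\lambda(I)<\infty$ and $\mu(\mathcal{T})=1$, it suffices to bound the robustness pointwise by something $\mu$-square-integrable.

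The core of the argument is a structural induction on $\varphi$ to show that there exist nonnegative constants $A_\varphi,B_\varphi$ such that
\begin{equation*}
|\rho(\varphi,\xi,t)| \;\le\; A_\varphi \,\|\xi\|_\infty + B_\varphi \qquad \text{for all } \xi\in\mathcal{T},\ t\in I.
\end{equation*}
For the base cases: the constant $\mathtt{tt}$ has robustness $+\infty$ in the usual semantics but in practice a large finite cap, and for an atomic predicate $\mu$ with Lipschitz evaluator $g$ the robustness is $g(\xi(t))$, which is bounded by $L_g\|\xi\|_\infty+|g(0)|$. For the inductive step, $\rho(\snot\varphi,\xi,t)=-\rho(\varphi,\xi,t)$ keeps the same absolute bound; $\rho(\varphi_1\sand\varphi_2,\xi,t)=\min(\rho(\varphi_1,\xi,t),\rho(\varphi_2,\xi,t))$ is bounded in absolute value by $\max$ of the two bounds; and the $\until_{[a,b]}$ operator, being a finite $\sup$--$\inf$ over a bounded time window, inherits the same pointwise envelope.

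With this envelope in hand the integrability estimate is immediate via Fubini:
\begin{equation*}
\int_{\mathcal{T}\times I} \rho(\varphi,\xi,t)^2 \, d(\mu\otimes\lambda)
\;\le\; \lambda(I)\int_{\mathcal{T}} \bigl(A_\varphi\|\xi\|_\infty+B_\varphi\bigr)^2 \, d\mu(\xi),
\end{equation*}
and the right-hand side is finite as long as the process defining $\mu$ has a finite second moment of the sup-norm.

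The main obstacle, in my view, is not the inductive bound (which is routine) but justifying the moment condition $\expectation_\mu[\|\xi\|_\infty^2]<\infty$ for the concretely chosen stochastic process on $\mathcal{T}$. Because the authors design $\mu$ so that mass concentrates on trajectories of small total variation, this should hold (and in typical constructions via Gaussian or piecewise-constant processes with integrable increments one gets all moments), but it is the step that ties the abstract statement to the specific measure, and any atomic predicate with super-linear growth would also require widening the bound above from linear to polynomial in $\|\xi\|_\infty$, matched by a corresponding higher-moment assumption on $\mu$.
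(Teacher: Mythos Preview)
Your argument is correct, but the paper takes a shorter and cruder route. Rather than bounding $|\rho(\varphi,\xi,t)|$ by an affine function of $\|\xi\|_\infty$ via structural induction and then invoking a second-moment hypothesis on $\mu$, the paper simply \emph{adds} in the proof the hypothesis that $\mathcal{T}$ is a bounded subset of $C(I)$ in sup norm, say $\|\xi\|_\infty\le B$ for all $\xi\in\mathcal{T}$. With atomic predicates of the form $x\ge k$, this immediately gives the uniform pointwise bound $|\rho(\varphi,\xi,t)|\le B+M(\varphi)$, where $M(\varphi)$ is the largest absolute threshold appearing in the atomic predicates of $\varphi$; since $\mu$ is a probability measure and $I$ is bounded, the $L^2$ norm is then at most $(B+M(\varphi))^2|I|$. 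No induction is written out, and no moment condition is needed. Your version is more general and, arguably, more honest: the paper's own sampling scheme for $\mu_0$ (Gaussian starting point, squared-Gaussian total variation) produces trajectories with unbounded support, so the uniform-boundedness hypothesis does not literally hold there, whereas your second-moment condition $\expectation_{\mu_0}[\|\xi\|_\infty^2]<\infty$ does. The trade-off is length: the paper's proof is three lines, yours requires the induction and the moment check.
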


For proving the theorem, we need to recall the definition of $L^2$ and its inner product.
  \begin{definition}
  	Given a measure space $(\Omega, \mu)$, we call Lebesgue space $L^2$ the space defined by
  	$$L^2(\Omega) = \{f\in\Omega: \|f\|_{L^2} < \infty\}\,,$$
  	where $\|\cdot\|_{L^2} $ is a norm defined by
  	$$\|f\|_{L^2} =\left(\int_\Omega |f|^2 d\mu\right)^{\half}\,.$$
  	We define the function $\bangle{\cdot,\cdot}_{L^2}:L^2(\Omega)\times L^2(\Omega)\to\reals$ as
  	$$\bangle{f,g}_{L^2} = \int_\Omega fg\,.$$
  	It can be easily proved that $\bangle{\cdot, \cdot}_{L^2}$ is a inner product.
  \end{definition}
  Furthermore, we have the following result.
  \begin{proposition}[\cite{kothe1983topological}]
  	$L^2(\Omega)$ with the inner product $\bangle{\cdot, \cdot}_{L^2}$ is a Hilbert space.
  \end{proposition}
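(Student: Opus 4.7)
The plan is to prove the two parts separately: first that $\bangle{\cdot,\cdot}_{L^2}$ is a genuine inner product on $L^2(\Omega)$, and then that the induced norm makes $L^2(\Omega)$ complete. The first part is essentially bookkeeping. Bilinearity and symmetry of $\bangle{f,g}_{L^2}=\int_\Omega fg\,d\mu$ are immediate from linearity of the integral; positive definiteness is where one must remember the standard convention that elements of $L^2(\Omega)$ are equivalence classes of functions equal $\mu$-almost everywhere, so $\bangle{f,f}_{L^2}=\int |f|^2\,d\mu=0$ forces $f=0$ in $L^2(\Omega)$. The fact that the integral $\int fg$ converges for $f,g\in L^2(\Omega)$ follows from the Cauchy--Schwarz inequality $|\int fg|\leq \|f\|_{L^2}\|g\|_{L^2}$, which in turn is the usual consequence of expanding $\|f-\lambda g\|_{L^2}^2\geq 0$ and minimizing over $\lambda\in\reals$.

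The substantive part is completeness, i.e.\ the Riesz--Fischer theorem. I would take a Cauchy sequence $(f_n)\subset L^2(\Omega)$ and extract a subsequence $(f_{n_k})$ with $\|f_{n_{k+1}}-f_{n_k}\|_{L^2}<2^{-k}$. Define the partial sums $g_K=\sum_{k=1}^{K}|f_{n_{k+1}}-f_{n_k}|$ and let $g=\lim_K g_K$, which exists in $[0,\infty]$ pointwise by monotonicity. By the triangle inequality (Minkowski) $\|g_K\|_{L^2}\leq \sum_{k=1}^{K}2^{-k}<1$, so by the monotone convergence theorem $\|g\|_{L^2}\leq 1$; hence $g<\infty$ $\mu$-almost everywhere. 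This means the telescoping series $\sum_k (f_{n_{k+1}}-f_{n_k})$ converges absolutely $\mu$-a.e., so $f_{n_k}$ converges $\mu$-a.e. to some measurable function $f$.

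The final step is to upgrade the a.e.\ pointwise convergence of the subsequence to $L^2$ convergence of the full sequence. Applying Fatou's lemma to $|f_{n_k}-f_{n_j}|^2$ as $k\to\infty$ gives $\|f-f_{n_j}\|_{L^2}\leq \liminf_k \|f_{n_k}-f_{n_j}\|_{L^2}$, which can be made arbitrarily small by the Cauchy property, thereby showing $f\in L^2(\Omega)$ and $f_{n_j}\to f$ in $L^2$. A standard $\varepsilon/2$ argument combining Cauchyness of $(f_n)$ with convergence of the subsequence $(f_{n_j})$ then yields $f_n\to f$ in $L^2(\Omega)$.

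The main obstacle is completeness, and within it the delicate point is that the a.e.\ pointwise limit $f$ of the fast subsequence need not coincide with any naive limit of $(f_n)$ in a stronger sense; the argument hinges on the interplay between the monotone convergence theorem (to control $\|g\|_{L^2}$), Fatou's lemma (to show $f\in L^2$ and estimate the tail), and the Cauchy condition (to pass from subsequence to full sequence). Since the statement is cited from \cite{kothe1983topological}, I would note that this is the classical Riesz--Fischer theorem and reference it, rather than reproduce the full argument in the paper.
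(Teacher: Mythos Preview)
Your proof is mathematically correct: the inner-product verification is routine, and the completeness argument via a fast Cauchy subsequence, monotone convergence to control $\|g\|_{L^2}$, and Fatou to recover $L^2$ convergence is the standard Riesz--Fischer proof. However, the paper does not prove this proposition at all; it is stated with a citation to \cite{kothe1983topological} and nothing more. You anticipate this yourself in your final paragraph, and that instinct is exactly right: in context the proposition is background, and the paper simply invokes the reference.
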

  
\begin{proof}[Proof Theorem \ref{th:L2}]
  In order to satisfy (\ref{subset L}) we can make the hypothesis that
  $\mathcal T$ is a bounded (in the $\sup$ norm) subset of $C(I)$, with $I$ a bounded interval, which means that exists $M\in\reals$ such that $\|\xi\|_{\infty} = \sup_{t\in I} \xi(t) \leq \infty$ for all $\xi\in\mathcal T$. Moreover, the measure on $\mathcal T$ is a distribution, and so it is a finite measure. Hence
  \begin{align*}
  \int_{\xi\in\mathcal T}\int_{t\in I}|h(\varphi)(\xi, t)|^2dtd\mu = &\int_{\xi\in\mathcal T}\int_{t\in I}|\rho(\varphi, \xi, t)|^2dtd\mu \\\leq &\int_{\xi\in\mathcal T}\int_{t\in I}|B + M(\varphi)|^2dtd\mu \\\leq &(B+M(\varphi))^2|I|
  \end{align*}
  for each $\varphi\in\mathcal P$ and $M(\varphi)$ is the maximum absolute value of an atomic proposition of $\varphi$. This implies $h(\mathcal P)\subseteq L^2(\mathcal T\times I)$.
\end{proof} 
  
We can now use the dot product in $L^2$ as a kernel for $\mathcal{P}$. In such a way, we will obtain a kernel that returns a high positive value for formulae that agree on high-probability trajectories and high negative values for formulae that, on average, disagree. 

\begin{definition}
    Fixing a probability measure $\mu_0$ on $\mathcal{T}$, we can then define the STL-kernel as:
  \begin{equation*}
  	\kr'(\varphi, \psi) = \bangle{h(\varphi), h(\psi)}_{L^2(\mathcal T\times I)} = \int_{\xi\in\mathcal T}\int_{t\in I}h(\varphi)(\xi, t)h(\psi)(\xi, t)dtd\mu_0 = \int_{\xi\in\mathcal T}\int_{t\in I}\rho(\varphi, \xi, t)\rho(\psi, \xi, t)dtd\mu_0\,
  	\end{equation*}
 \end{definition}

 Since the function $\kr'$ satisfies the finitely positive semi-definite property, we can be proved that is a kernel itself.

\begin{proposition}\label{kernel matrices characterization}
  	Kernel matrices are positive semi-definite.
  \end{proposition}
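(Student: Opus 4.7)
The plan is to verify the positive semi-definiteness directly from the definition, exploiting the representation of any kernel as an inner product in some feature space Hilbert space. Concretely, given points $x_1,\dots,x_n \in X$, let $K$ be the associated Gram matrix with entries $K_{ij}=\kr(x_i,x_j)$. To show $K \succeq 0$, I would pick an arbitrary $c\in\reals^n$ and argue that $c^\top K c \geq 0$.

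First I would invoke the feature-map representation $\kr(x,z) = \bangle{\phi(x),\phi(z)}$ for some $\phi:X\to F$ into a Hilbert space $F$ (for $\kr'$ this is precisely the map $h$ into $L^2(\mathcal{T}\times I)$ guaranteed by Theorem~\ref{th:L2}). Substituting into the quadratic form gives
\begin{equation*}
c^\top K c \;=\; \sum_{i,j=1}^n c_i c_j \bangle{\phi(x_i),\phi(x_j)} \;=\; \Bigl\langle \sum_{i=1}^n c_i \phi(x_i),\; \sum_{j=1}^n c_j \phi(x_j) \Bigr\rangle,
\end{equation*}
where in the last step I use bilinearity of the inner product.

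Finally, the right-hand side is the squared norm $\bigl\lVert \sum_i c_i \phi(x_i) \bigr\rVert_F^2$, which is non-negative by the defining property of an inner product. Since $c$ was arbitrary, $K$ is positive semi-definite. There is no real obstacle here: the argument is just bilinearity plus non-negativity of the norm, and the only thing one needs in the STL setting is that the feature map $h$ actually lands in a genuine inner product space, which is exactly the content of Theorem~\ref{th:L2} and the subsequent proposition identifying $L^2$ as a Hilbert space.
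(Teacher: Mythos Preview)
Your proposal is correct and follows essentially the same approach as the paper: both expand $c^\top K c$ using the feature-map representation $\kr(x,z)=\bangle{\phi(x),\phi(z)}$, apply bilinearity to rewrite the double sum as an inner product of $\sum_i c_i\phi(x_i)$ with itself, and conclude by non-negativity of the squared norm. The only differences are notational.
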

  \begin{proof}
  	Let us consider the general case, that is $G_{ij} = \bangle{\phi(x_i), \phi(x_j)}$ for $i, j = 1, ..., k$. Let us consider
  	\begin{align*}
  	v^T Gv &= \sum_{i, j=1}^k v_i v_j G_{ij}= \sum_{i, j=1}^k v_i v_j \bangle{\phi(x_i), \phi(x_j)}\\
  	&=  \left\langle\sum_{i=1}^kv_i\phi(x_i),\sum_{j=1}^kv_j\phi(x_j)\right\rangle \\
  	&=\left\|\sum_{i=1}^kv_i\phi(x_i)\right\|^2\geq 0\,,
  	\end{align*}
  	which implies that $G$ is positive semi-definite.
  \end{proof}

\begin{theorem}[Characterization of kernels]\label{kernels characterization}
  	A function $\kr:X\times X\to\reals$ which is either continuous or has a finite domain, can be written as
  	$$\kr(x,z)=\bangle{\phi(x),\phi(z)}\,,$$
  	where $\phi$ is a feature map into a Hilbert space $F_\kr$, if and only if it satisfies the finitely positive semi-definite property.
  \end{theorem}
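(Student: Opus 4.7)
The plan is to prove the two directions of the equivalence separately. The direction $(\Rightarrow)$ is already handled by Proposition \ref{kernel matrices characterization}: if $\kr(x,z)=\bangle{\phi(x),\phi(z)}$ for some feature map into a Hilbert space, then every Gram matrix inherits the positive semi-definiteness of the inner product by the identity $v^T G v = \|\sum_i v_i \phi(x_i)\|^2\geq 0$, so the finitely positive semi-definite property is automatic. So the real content is the converse.

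For $(\Leftarrow)$, I would use the standard Moore--Aronszajn construction of the reproducing kernel Hilbert space. Starting from a finitely PSD function $\kr$, I would define the candidate feature map $\phi(x) := \kr(\cdot,x)$, and let $F_0$ be the pre-Hilbert space of finite linear combinations $f = \sum_{i=1}^n \alpha_i \kr(\cdot,x_i)$, with bilinear form
\begin{equation*}
\Big\langle \sum_i \alpha_i \kr(\cdot,x_i),\ \sum_j \beta_j \kr(\cdot,z_j)\Big\rangle := \sum_{i,j} \alpha_i \beta_j \kr(x_i,z_j).
\end{equation*}
The first routine checks are: (i) the value is independent of the particular representation of $f$ and $g$ (use that $\bangle{f,\kr(\cdot,z)} = f(z)$, the reproducing property, which holds directly from the definition); (ii) bilinearity and symmetry (by symmetry of $\kr$); (iii) non-negativity of $\bangle{f,f}$, which is exactly the finitely PSD hypothesis applied to the points $x_1,\dots,x_n$ with coefficients $\alpha_1,\dots,\alpha_n$.

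The main obstacle is upgrading this positive semi-definite bilinear form to a genuine inner product and then completing to a Hilbert space. For definiteness, once one has the Cauchy--Schwarz inequality (which follows from non-negativity of $\bangle{f-\lambda g,f-\lambda g}$ in the usual way), one gets $|f(x)|^2 = |\bangle{f,\kr(\cdot,x)}|^2 \leq \bangle{f,f}\,\kr(x,x)$, so $\bangle{f,f}=0$ forces $f\equiv 0$ as a function on $X$; hence $F_0$ is a true inner product space. Completing $F_0$ in the induced norm produces a Hilbert space $F_\kr$, and one must check that the completion can be realized as a space of functions on $X$ (not just abstract equivalence classes of Cauchy sequences): this is where the hypothesis that $\kr$ is either continuous or has finite domain matters, because pointwise evaluation $f\mapsto f(x)$ is a bounded linear functional on $F_0$ (bounded by $\sqrt{\kr(x,x)}$) and extends to the completion, giving each Cauchy class a well-defined value at every $x$. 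With this realization in hand, the reproducing identity $\bangle{\phi(x),\phi(z)}_{F_\kr}=\kr(x,z)$ holds by construction, which is exactly the claim.
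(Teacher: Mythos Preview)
Your proposal is correct and follows essentially the same Moore--Aronszajn construction as the paper: build the pre-Hilbert space of finite combinations $\sum_i\alpha_i\kr(\cdot,x_i)$, check the reproducing property, use it together with Cauchy--Schwarz to obtain definiteness, and then complete, realizing the completion as a space of functions via the pointwise bound $|f(x)|\le\sqrt{\kr(x,x)}\,\|f\|$. One small remark: that pointwise bound already follows from the reproducing property and Cauchy--Schwarz alone, so the continuity/finite-domain hypothesis is not actually what makes the evaluation functionals bounded---neither your argument nor the paper's truly uses that hypothesis at this step.
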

  \begin{proof}
  	Firstly, let us observe that if $\kr(x,z)=\bangle{\phi(x),\phi(z)}$, then it satisfies the finitely positive semi-definite property for the Proposition \ref{kernel matrices characterization}. The difficult part to prove is the other implication.
  	
  	Let us suppose that $\kr$ satisfies the finitely positive semi-definite property. We will construct the Hilbert space $F_\kr$ as a function space. We recall that $F_\kr$ is a Hilbert space if it is a vector space with an inner product that induces a norm that makes the space complete.
  	
  	Let us consider the function space
  	$$\mathcal F=\left\{\sum_{i=1}^n\alpha_i\kr(x_i,\cdot)\,:\,n\in\naturals,\ \alpha_i\in\reals,\ x_i\in X,\ i=1,...,n\right\}\,.$$
  	The sum in this space is defined as
  	$$(f+g)(x)=f(x)+g(x)\,,$$
  	which is clearly a close operation. The multiplication by a scalar is a close operation too. Hence, $\mathcal F$ is a vector space.
  	
  	We define the inner product in $\mathcal F$ as follows. Let $f, g\in\mathcal F$ be defined by
  	$$f(x) = \sum_{i=1}^n\alpha_i\kr(x_i,x)\,, g(x) = \sum_{i=1}^m\beta_i\kr(z_i,x)\,,$$
  	so the inner product is defined as
  	$$\bangle{f,g}:=\sum_{i=1}^n\sum_{j=1}^m\alpha_i\beta_j\kr(x_i,z_j) = \sum_{i=1}^n\alpha_i g(x_i)=\sum_{j=1}^m\beta_i f(z_j)\,,$$
  	where the last two equations follows from the definition of $f$ and $g$.
  	This map is clearly symmetric and bilinear. So, in order to be an inner product, it suffices to prove
  	$$\bangle{f,f}\geq 0\text{ for all } f\in\mathcal F\,,$$
  	and that
  	$$\bangle{f,f} = 0 \iff f \equiv 0\,.$$
  	If we define the vector $\alpha = (\alpha_1,...,\alpha_n)$ we obtain
  	$$\bangle{f,f}=\sum_{i=1}^n\sum_{j=1}^n\alpha_i\alpha_j\kr(x_i,x_j)=\alpha^T K\alpha\geq 0\,,$$
  	where $K$ is the kernel matrix constructed over $x_1,...,x_n$ and the last equality holds because $\kr$ satisfies the finite positive semi-definite property.
  	
  	It is worth to notice that this inner product satisfies the property
  	$$\bangle{f,\kr(x,\cdot)} = \sum_{i=1}^n\alpha_i \kr(x_i,x)=f(x)\,.$$
  	This property is called \emph{reproducing property} of the kernel.
  	
  	From this property it follows also that, if $\bangle{f,f}=0$ then
  	$$f(x) = \bangle{f,\kr(x,\cdot)}\leq \|f\|\kr(x,x) = 0\,,$$
  	applying the Cauchy-Schwarz inequality and the definition of the norm. The other side of the implication, i.e. $$f\equiv 0\implies\bangle{f,f} =0\,,$$ follows directly from the definition of the inner product.
  	
  	It remains to show the completeness property. Actually, we will not show that $\mathcal F$ is complete, but we will use $\mathcal F$ to construct the space $F_\kr$ of the enunciate. Let us fix $x$ and consider a Cauchy sequence $\{f_n\}_{n=1}^\infty$. Using the reproducing property we obtain
  	$$(f_n(x)-f_m(x))^2=\bangle{f_n-f_m,\kr(x,\cdot)}^2\leq\|f_n-f_m\|^2\kr(x,x)\,.$$
  	where we applied the Cauchy-Schwarz inequality. So, for the completeness of $\reals$, $f_n(x)$ has a limit, that we call $g(x)$. Hence we define $g$ as the punctual limit of $f_n$ and we define $F_\kr$ as the space obtained by the union of $\mathcal F$ and the limit of all the Cauchy sequence in $\mathcal F$, i.e.
  	$$F_\kr = \overline{\mathcal F}\,,$$
  	which is the closure of $\mathcal F$. Moreover, the inner product in $\mathcal F$ extends naturally in an inner product in $F_\kr$ which satisfies all the desired properties.
  	
  	In order to complete the proof we have to define a map $\phi:X\to F_\kr$ such that
  	$$\bangle{\phi(x),\phi(z)}=\kr(x,z)\,.$$
  	The map $\phi$ that we are looking for is $\phi(x) = \kr(x,\cdot)$. In fact
  	$$\bangle{\phi(x),\phi(z)}=\bangle{\kr(x,\cdot),\kr(z,\cdot)}=\kr(x,z)\,.$$
  \end{proof}

  One desirable property of our kernel is that $
  \kr(\varphi, \varphi) \geq \kr(\varphi, \psi)\,,\ \forall \varphi, \psi\in\mathcal P.$   In fact, given a formula $\varphi$, no formula should be more similar to $\varphi$ then $\varphi$ itself.  This property can be enforced by redefining the kernel as follows:
  \begin{equation*}\label{normalised kernel}
  \kr(\varphi, \psi) = \frac{\kr'(\varphi, \psi)}{\sqrt{\kr'(\varphi, \varphi)\kr'(\psi, \psi)}}\,.
  \end{equation*}

\subsection{The base measure $\mu_0$}
\label{par:trajectory space.}
  In order to make our kernel meaningful and not too expensive to compute, we endow the trajectory space $\mathcal T$ with a probability distribution such that more complex trajectories are less probable.  We use the total variation~\cite{pallara2000functions} of a trajectory and the number of changes in its monotonicity  as indicators of its "complexity". We define the probability measure $\mu_0$ by providing an algorithm sampling from piece-wise linear functions, a dense subset of $\mathcal{T}$, that we use for Monte Carlo approximation of $\kr$.
  
   Before describing the algorithm, we need the definition of Total Variation of a function \cite{pallara2000functions}.
  \begin{definition}[Total Variation]
  	Let $f\in C(I)$. We call \emph{Total Variation} of $f$ on a finite interval $[a, b]$ the quantity
  	\begin{equation}\label{bv norm}
  	V_a^b(f) = \sup_{P\in \mathbf P}\sum_{i=0}^{n_{P}-1}|f(x_{i+1})-f(x_i)|\,,\ \forall f\in C(I)\,
  	\end{equation}
  	where $\mathbf P$ is the set of all partitions of the interval $[a, b]$.
  \end{definition}
  We use the total variation of a trajectory as an indicator of its "complexity".
  We also take the number of changes in the monotonicity behavior of a trajectory as another indicator of "complexity". The idea is to endow $\mathcal T$ with a probability distribution such that more complex trajectories are less likely to be drawn. We describe a sampling algorithm over piecewise linear functions that we use for Monte Carlo approximation. In doing so, we sample from a dense subset of $C(I)$. 
  
  The sampling algorithm is the following:
  \begin{enumerate}
  	\item Set a discretization step $h$; define $N = \frac{b-a}{h}$ and $t_i = a + ih$;
  	\item Sample a starting point $\xi_0\sim\normal(0, \sigma')$ and set $\xi(t_0) = \xi_0$;
  	\item Sample $K\sim(\normal(0, \sigma''))^2$, that will be the total variation of $\xi$;
  	\item Sample $N-1$ points $y_1,...,y_{N-1}\sim\uniform([0, K])$ and set $y_0=0$ and $y_n=K$;
  	\item Order $y_1, ..., y_{N-1}$ and rename them such that $y_1 \leq y_2 \leq...\leq y_{N-1}$;
  	\item Samle $s_0\sim\discrete(-1, 1)$;
  	\item Set iteratively $\xi(t_{i+1}) = \xi(t_i) + s_{i+1}(y_{i+1}-y_i)$ with $s_{i+1} = s_is$,\\
  	$P(s=-1) = q$ and $P(s=1) = 1-q$, for $i = 1, 2, ..., N$.
  \end{enumerate}
  Finally, we can linearly interpolate between consecutive points of the discretization and make the trajectory continuous, i.e., $\xi\in C(I)$.
  
  \begin{figure}[!t]
  	\begin{center}
  		\includegraphics[scale=0.8]{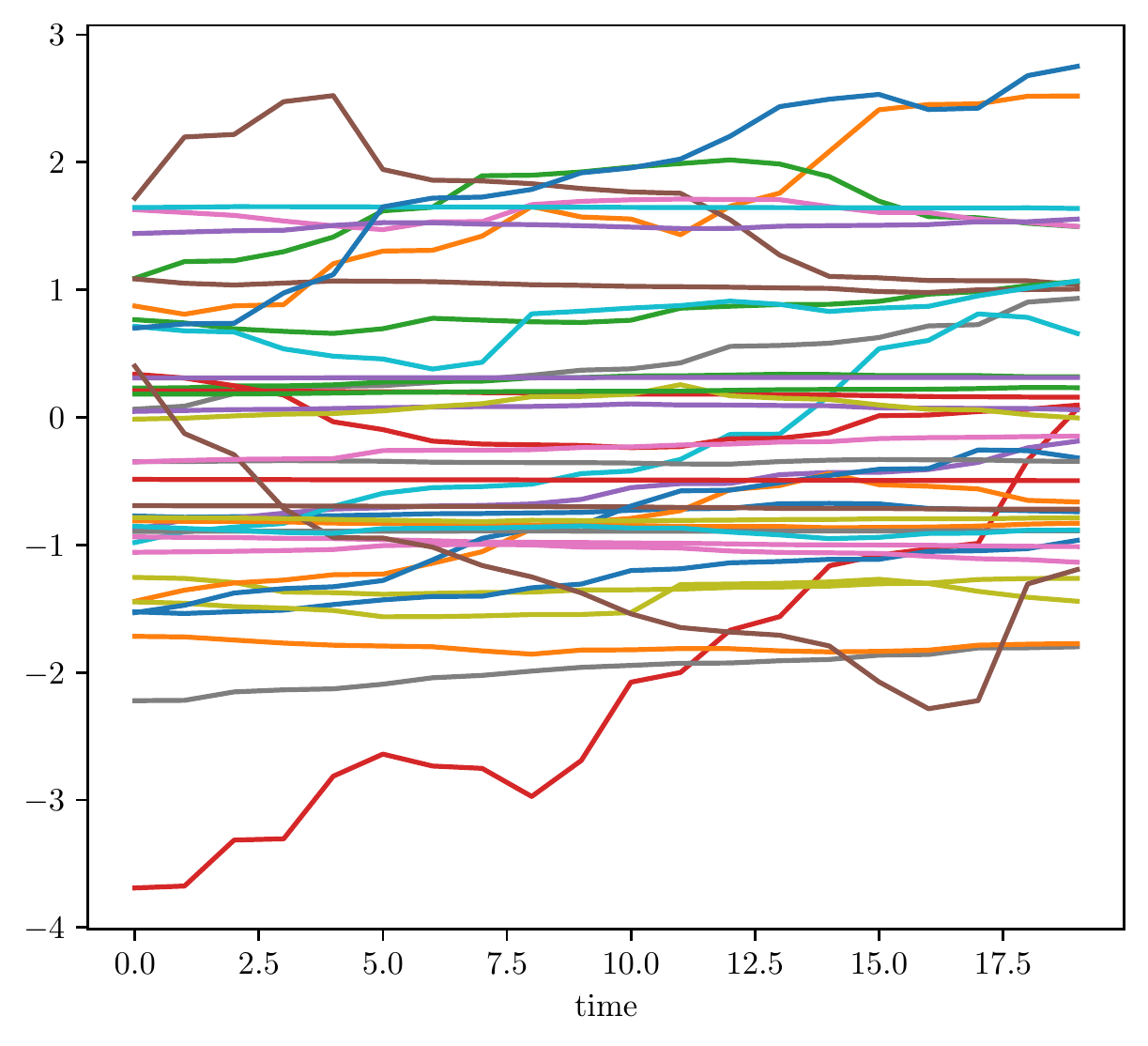}
  	\end{center}
  	\caption{\label{fig:trajectories_base}Trajectories randomly sampled from $\mathcal T$ using the above descripted algorithm.}
  \end{figure}
  For our implementation, we fixed the above parameters as follows:
  \begin{itemize}
  	\item $a = 0$,
  	\item $b = 20$,
  	\item $h = 1$,
  	\item $\sigma' = 1$,
  	\item $\sigma'' = 1$,
  	\item $q = 0.1$.
  \end{itemize}

  In the next section, we show that using this simple measure still allows us to make predictions with remarkable accuracy for other stochastic processes on $\mathcal{T}$.

\section{Experimental Results}
\label{sec:exp}
\subsection{Kernel Regression on $\mu_0$} 
To show the goodness of our kernel definition, we use it to predict the expected robustness and the satisfaction probability of STL formulae w.r.t. the stochastic process $\mu_0$ defined on $\mathcal{T}$. 
We use a training set composed of 400 formulae sampled randomly according to a syntax tree random growing scheme as follows:
  \begin{enumerate}
  	\item Sample the number of atomic predicates $$n\sim\discrete(1,2,...,6)\,;$$
  	\item Sample $$k_i\sim\uniform([-7, 7])\,;$$
  	\item Create a set formulae $$P = \{\mu_1,...,\mu_n\}\,,$$ where $\mu_i$ is the atomic predicate $(x \geq k_i)$;
  	\item With $50\%$ of probability select the operator $*=\snot\,$; with the remaining $50\%$ sample an operator $$*\sim\discrete(\sor,\sand,\until,\eventually,\always)\,;$$
  	\item Randomly sample 1 or 2 formulae from $P$ (depending if $*$ is an unary or a binary operator) and apply $*$ to it or them, obtaining the formula $\varphi$;
  	\item Remove the formula/formulae sampled at step (5) from $P$ and add $\varphi$ to $P$;
  	\item If $P$ has more than one element, repeat from step (4), otherwise continue to step (8);
  	\item The output formula is, with $50\%$ of probability, the last formula of $P$; for the other $50\%$ of probability, sample an operator $$*\sim\discrete(\snot,\eventually,\always)$$ and apply it to the last formula of $P$: the resulting formula is the output of the algorithm.
  \end{enumerate}

Then, we approximate expected robustness and satisfaction probability using a set of \numprint{100000} trajectories sampled according to $\mu_0$.
We compare the following regression models: {\it Nadaraya-Watson estimator}, {\it K-Nearest Neighbors regression}, {\it Support Vector Regression} (SVR) and {\it Kernel Ridge Regression} (KRR) \cite{murphy2012machine}. We obtain the lowest \emph{Mean Squared Error} (MSE) on expected robustness, equal to $0.29$, using an SVR with a Gaussian kernel and $\sigma=0.5$. On the other hand, the best performances in predicting the satisfaction probability were given by the KRR, with an MSE equal to $0.00036$.

\paragraph{Kernel Regression on  other stochastic processes} 
The last aspect that we investigate is whether the definition of our kernel w.r.t. the fixed measure $\mu_0$ can be used for making predictions of the average robustness also 
for other stochastic processes, i.e., while taking expectations w.r.t. other probability measures $\mu$ on $\mathcal{T}$. We compare this with a kernel defined w.r.t $\mu$ itself. 
We used three different stochastic models: \emph{Immigration}, \emph{Isomerization}  and \emph{Polymerase}, simulated using the Python library StochPy \cite{maarleveld2013stochpy}, Figure~\ref{fig:stoch_trajectories}.   
\begin{figure}[!t]
  	\begin{center}
  		\hspace*{-0cm}\includegraphics[scale=0.22]{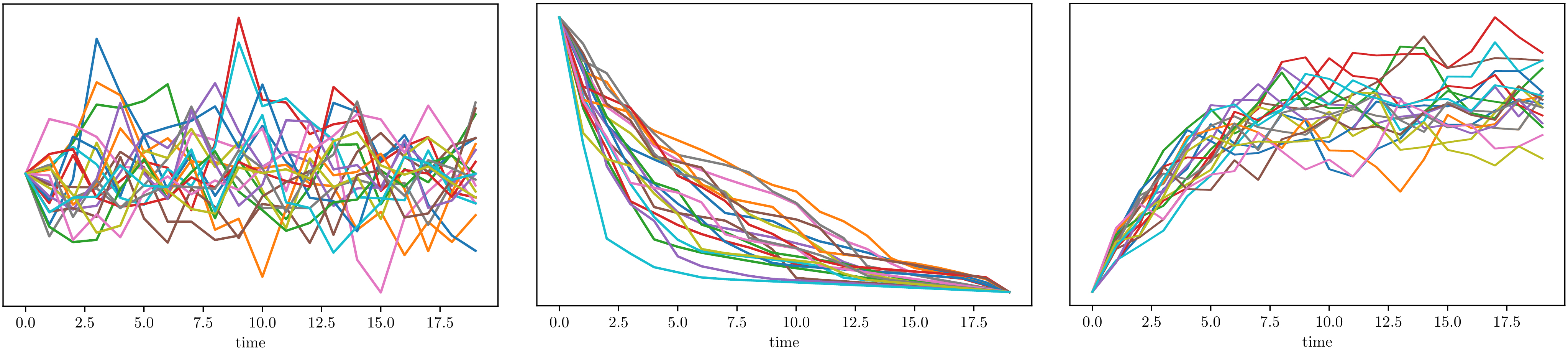}
  	\end{center}
  	\vspace*{-0cm}\caption{\label{fig:stoch_trajectories}From left to right, trajectories generated by the \emph{Immigration} model, by the \emph{Isomerization} model and by the \emph{Polymerase} model.}
  \end{figure}

As it can be seen from Figure \ref{fig:compare_sigma_exp_rob_other_stoch} (left), our base kernel is the best performing one. 
This can be explained by the fact that the measure $\mu_0$ is broad in terms of coverage of the trajectory space, meaning that different kinds of behaviours tend to be covered. This allows to better distinguish among STL formulae, compared to models that tend to focus the probability mass on narrower regions of $\mathcal{T}$, such as the \emph{Isomerization} model (which has the worst performance). 
Also in this case, we obtained the best results using SVR and KRR. However, given the sparseness of SVR, it's more convenient to use it, since we need to evaluate a lower number of kernels to perform the regression. 
Interestingly, the minimum MSE is obtained using the Gaussian kernel with exactly the same $\sigma$ parameter as for the regression task on $\mu_0$, hinting for some intrinsic robustness to hyperparameter's choice that has to be investigated in greater detail. 
  In Figure \ref{fig:compare_sigma_exp_rob_other_stoch} (right) we show the predictions for the expected robustness over the three stochastic models that we took as examples, using the best regression model that we have found so far, which is the SVR with a Gaussian kernel having $\sigma = 0.22$. 
 Note that to compute the kernel by Monte Carlo approximation, we have to sample only once the required trajectories for $\mu_0$. We also need to estimate the expected robustness transition probability for the formulae comprising the training set. However, kernel regression permits us to avoid further simulations of the model $\mu$ for novel formulae $\phi$.  
 
 
  \begin{figure}[!t]
  	\begin{center}
  		\hspace*{-0cm}\includegraphics[scale=.41]{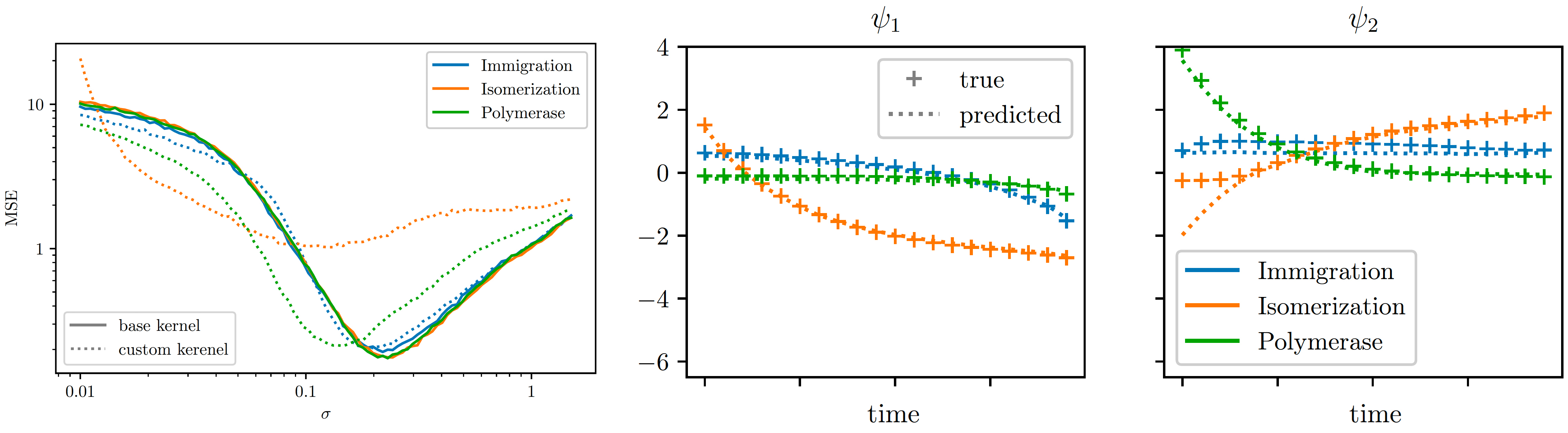}
  	\end{center}
  	\vspace*{-0cm}\caption{\label{fig:compare_sigma_exp_rob_other_stoch}({\bf left}) MSE as a function of the bandwidth $\sigma$ of the Gaussian kernel, for the prediction of the expected robustness. We compare the performances on different stochastic models, using both the kernel evaluated according to the base measure $\mu_0$ (base kernel), and a custom kernel computed using the trajectories generated by the stochastic model itself. Both the axis are in logarithmic scale. ({\bf center, right}) predictions of the expected robustness for formulae $\psi_1= \eventually x \geq 1.5$ and $\psi_2=(\snot(((\always x \geq -1.4) \sor (\always x \geq 2.7)) \sor x \leq 0.7)) \sor x \leq 0$, over three different trajectory spaces. The predictions are made using SVR on a Gaussian kernel, with the best performing bandwidth $\sigma$, which is $\sigma = 0.22$.
  	}
  \end{figure}
 
\section{Conclusions}

We defined a kernel for STL, fixing a base measure over trajectories, and we showed that we can use \emph{exactly} the same kernel across different stochastic models for computing a very precise approximation of the expected robustness of new formulae, with only the knowledge of the expected robustness of a fixed set of training formulae. 
Our STL-kernel, however, can also be used for other tasks. For instance, computing STL-based distances among stochastic models, resorting to a dual kernel construction, and building non-linear embeddings of formulae into finite dimensional real spaces with kernel-PCA techniques. Another direction for future work is to refine the quantitative semantics in such a way that equivalent formulae have the same robustness, e.g. using ideas like in \cite{madsen2018metrics}.


\bibliographystyle{abbrv}
\bibliography{biblio}

\begin{thebibliography}{10}

\bibitem{BacciBLM16}
G.~Bacci, G.~Bacci, K.~G. Larsen, and R.~Mardare.
\newblock Complete axiomatization for the bisimilarity distance on markov
  chains.
\newblock In J.~Desharnais and R.~Jagadeesan, editors, {\em 27th International
  Conference on Concurrency Theory, {CONCUR} 2016, August 23-26, 2016,
  Qu{\'{e}}bec City, Canada}, volume~59 of {\em LIPIcs}, pages 21:1--21:14.
  Schloss Dagstuhl - Leibniz-Zentrum f{\"{u}}r Informatik, 2016.

\bibitem{BartocciBNS15}
E.~Bartocci, L.~Bortolussi, L.~Nenzi, and G.~Sanguinetti.
\newblock System design of stochastic models using robustness of temporal
  properties.
\newblock {\em Theor. Comput. Sci.}, 587:3--25, 2015.

\bibitem{BBS14}
E.~Bartocci, L.~Bortolussi, and G.~Sanguinetti.
\newblock Data-driven statistical learning of temporal logic properties.
\newblock In {\em Proc. of {FORMATS}}, pages 23--37, 2014.

\bibitem{bartocci2018specification}
E.~Bartocci, J.~Deshmukh, A.~Donz{\'e}, G.~Fainekos, O.~Maler,
  D.~Ni{\v{c}}kovi{\'c}, and S.~Sankaranarayanan.
\newblock Specification-based monitoring of cyber-physical systems: a survey on
  theory, tools and applications.
\newblock In {\em Lectures on Runtime Verification}, pages 135--175. Springer,
  2018.

\bibitem{bombara_decision_2016}
G.~Bombara, C.-I. Vasile, F.~Penedo, H.~Yasuoka, and C.~Belta.
\newblock A {Decision} {Tree} {Approach} to {Data} {Classification} {Using}
  {Signal} {Temporal} {Logic}.
\newblock In {\em Proc. of {HSCC}}, pages 1--10, 2016.

\bibitem{brezis2010functional}
H.~Brezis.
\newblock {\em Functional analysis, Sobolev spaces and partial differential
  equations}.
\newblock Springer Science \& Business Media, 2010.

\bibitem{jan2016linear}
P.~Daca, T.~A. Henzinger, J.~Kret{\'{\i}}nsk{\'{y}}, and T.~Petrov.
\newblock Linear distances between markov chains.
\newblock In J.~Desharnais and R.~Jagadeesan, editors, {\em 27th International
  Conference on Concurrency Theory, {CONCUR} 2016, August 23-26, 2016,
  Qu{\'{e}}bec City, Canada}, volume~59 of {\em LIPIcs}, pages 20:1--20:15.
  Schloss Dagstuhl - Leibniz-Zentrum f{\"{u}}r Informatik, 2016.

\bibitem{donze2013efficient}
A.~Donz{\'e}, T.~Ferrere, and O.~Maler.
\newblock Efficient robust monitoring for stl.
\newblock In {\em International Conference on Computer Aided Verification},
  pages 264--279. Springer, 2013.

\bibitem{FainekosH019}
G.~Fainekos, B.~Hoxha, and S.~Sankaranarayanan.
\newblock Robustness of specifications and its applications to falsification,
  parameter mining, and runtime monitoring with s-taliro.
\newblock In B.~Finkbeiner and L.~Mariani, editors, {\em Runtime Verification -
  19th International Conference, {RV} 2019, Porto, Portugal, October 8-11,
  2019, Proceedings}, volume 11757 of {\em Lecture Notes in Computer Science},
  pages 27--47. Springer, 2019.

\bibitem{HaghighiMBB19}
I.~Haghighi, N.~Mehdipour, E.~Bartocci, and C.~Belta.
\newblock Control from signal temporal logic specifications with smooth
  cumulative quantitative semantics.
\newblock In {\em 58th {IEEE} Conference on Decision and Control, {CDC} 2019,
  Nice, France, December 11-13, 2019}, pages 4361--4366. {IEEE}, 2019.

\bibitem{kothe1983topological}
G.~K{\"o}the.
\newblock Topological vector spaces.
\newblock In {\em Topological Vector Spaces I}, pages 123--201. Springer, 1983.

\bibitem{maarleveld2013stochpy}
T.~R. Maarleveld, B.~G. Olivier, and F.~J. Bruggeman.
\newblock Stochpy: a comprehensive, user-friendly tool for simulating
  stochastic biological processes.
\newblock {\em PloS one}, 8(11):e79345, 2013.

\bibitem{madsen2018metrics}
C.~Madsen, P.~Vaidyanathan, S.~Sadraddini, C.-I. Vasile, N.~A. DeLateur,
  R.~Weiss, D.~Densmore, and C.~Belta.
\newblock Metrics for signal temporal logic formulae.
\newblock In {\em 2018 IEEE Conference on Decision and Control (CDC)}, pages
  1542--1547. IEEE, 2018.

\bibitem{Maler2004}
O.~Maler and D.~Nickovic.
\newblock Monitoring temporal properties of continuous signals.
\newblock In {\em Proc. FORMATS}, 2004.

\bibitem{murphy2012machine}
K.~P. Murphy.
\newblock {\em Machine learning: a probabilistic perspective}.
\newblock MIT press, 2012.

\bibitem{NenziSBB18}
L.~Nenzi, S.~Silvetti, E.~Bartocci, and L.~Bortolussi.
\newblock A robust genetic algorithm for learning temporal specifications from
  data.
\newblock In A.~McIver and A.~Horv{\'{a}}th, editors, {\em Quantitative
  Evaluation of Systems - 15th International Conference, {QEST} 2018, Beijing,
  China, September 4-7, 2018, Proceedings}, volume 11024 of {\em Lecture Notes
  in Computer Science}, pages 323--338. Springer, 2018.

\bibitem{pallara2000functions}
L.~A.-N. F.-D. Pallara, L.~Ambrosio, and N.~Fusco.
\newblock {\em Functions of bounded variation and free discontinuity problems}.
\newblock Oxford University Press, Oxford, 2000.

\bibitem{rasmussen:williams:2006}
C.~E. Rasmussen and C.~K.~I. Williams.
\newblock {\em Gaussian Processes for Machine Learning}.
\newblock MIT Press, 2006.

\bibitem{shawe2004kernel}
J.~Shawe-Taylor and N.~Cristianini.
\newblock {\em Kernel methods for pattern analysis}.
\newblock Cambridge Univ Pr, 2004.

\end{thebibliography}

\clearpage

\end{document}